\documentclass{article}

\usepackage{microtype}
\usepackage{graphicx}
\usepackage{subcaption}
\usepackage{float}
\usepackage{booktabs} 

\usepackage{hyperref}
\usepackage{xurl}
\usepackage{tcolorbox}

\usepackage[preprint]{icml2026}

\usepackage{amsmath}
\usepackage{amssymb}
\usepackage{mathtools}
\usepackage{amsthm}

\usepackage[capitalize,noabbrev]{cleveref}

\theoremstyle{plain}
\newtheorem{theorem}{Theorem}[section]

\newtheorem{lemma}[theorem]{Lemma}

\theoremstyle{definition}

\theoremstyle{remark}

\usepackage[disable,textsize=tiny]{todonotes}

\icmltitlerunning{Democratic Preference Alignment via Sortition-Weighted RLHF}

\begin{document}

\twocolumn[
  \icmltitle{Democratic Preference Alignment via Sortition-Weighted RLHF}



  \icmlsetsymbol{equal}{*}

  \begin{icmlauthorlist}
    \icmlauthor{Suvadip Sana}{equal,cornell}
    \icmlauthor{Jinzhou Wu}{equal,cornell}
    \icmlauthor{Martin T. Wells}{cornell}
  \end{icmlauthorlist}

  \icmlaffiliation{cornell}{Cornell University}

  \icmlcorrespondingauthor{Martin T. Wells}{mtw1@cornell.edu}

  \icmlkeywords{AI Alignment, Reinforcement Learning from Human Feedback, RLHF, Democratic AI, Sortition, Preference Optimization, Fairness, Social Choice, Bradley Terry}

  \vskip 0.3in
]



\printAffiliationsAndNotice{\icmlEqualContribution}

\begin{abstract}
Whose values should AI systems learn? Preference-based alignment methods like RLHF derive their training signal from human raters, yet these rater pools are typically convenience samples that systematically over-represent some demographics and under-represent others. We introduce Democratic Preference Optimization (DemPO), a framework that applies algorithmic sortition, the same mechanism used to construct citizen assemblies, to preference-based fine-tuning. DemPO offers two training schemes: Hard Panel, which trains exclusively on preferences from a quota-satisfying mini-public sampled via sortition, and Soft Panel, which retains all data but reweights each rater by their inclusion probability under the sortition lottery. We prove that Soft Panel weighting recovers the expected Hard Panel objective in closed form. Using a public preference dataset that pairs human judgments with rater demographics and a 75-clause constitution independently elicited from a representative U.S. panel, we evaluate Llama models (1B–8B) fine-tuned under each scheme. Across six aggregation methods, the Hard Panel consistently ranks first and the Soft Panel consistently outperforms the unweighted baseline, with effect sizes growing as model capacity increases. These results demonstrate that enforcing demographic representativeness at the preference-collection stage, rather than post-hoc correction, yields models whose behavior better reflects values elicited from representative publics.
\end{abstract}
\section{Introduction}

Preference-based alignment methods, such as reinforcement learning from human feedback (RLHF) \cite{christiano2017deep} and related objectives, are widely used to shape the behavior of the model for deployment. These pipelines implicitly define ``whose values'' the system learns through the composition of the rater pool and the aggregation rule used to convert feedback into an optimization signal. In practice, rater pools are often convenience samples that can be demographically skewed relative to a target population, which can systematically overweight some groups' judgments and underweight others. This raises a governance question for aligned systems deployed at scale: how should we construct the feedback signal so it better reflects the values of the population the system is meant to serve \cite{gabriel2020artificial,conitzer2024social}?

We introduce \emph{Democratic Preference Optimization (DemPO)}, a sortition-based framework for constructing preference-learning objectives from demographically representative mini-publics. DemPO uses algorithmic sortition to define a lottery over quota-feasible panels \cite{flanigan2021fair}. From this lottery we obtain (i) a sampled panel that acts as the training electorate (Hard Panel), and (ii) per-rater inclusion probabilities that define a representativeness-aware weighting scheme over the full dataset (Soft Panel). Both schemes can be instantiated with standard preference objectives such as Direct Preference Optimization (DPO) \cite{rafailov2023direct}.

We evaluate DemPO in PRISM \cite{kirk2024prism}, which provides preference data and demographic data for the rater, and measure downstream alignment using a 75-clause constitution derived from a representative panel from the US \ \cite{huang2024collective}. Across six aggregation families (Bradley--Terry, Plackett--Luce, Borda, Copeland, Kemeny-Young, Mallows), Hard Panel and US-Rep rank above Soft and Full PRISM, while Soft consistently improves over Full PRISM. We also analyze how these effects grow with model size.

\subsection{Our Contributions}

\begin{figure*}[t]
  \centering
  \includegraphics[width=\textwidth]{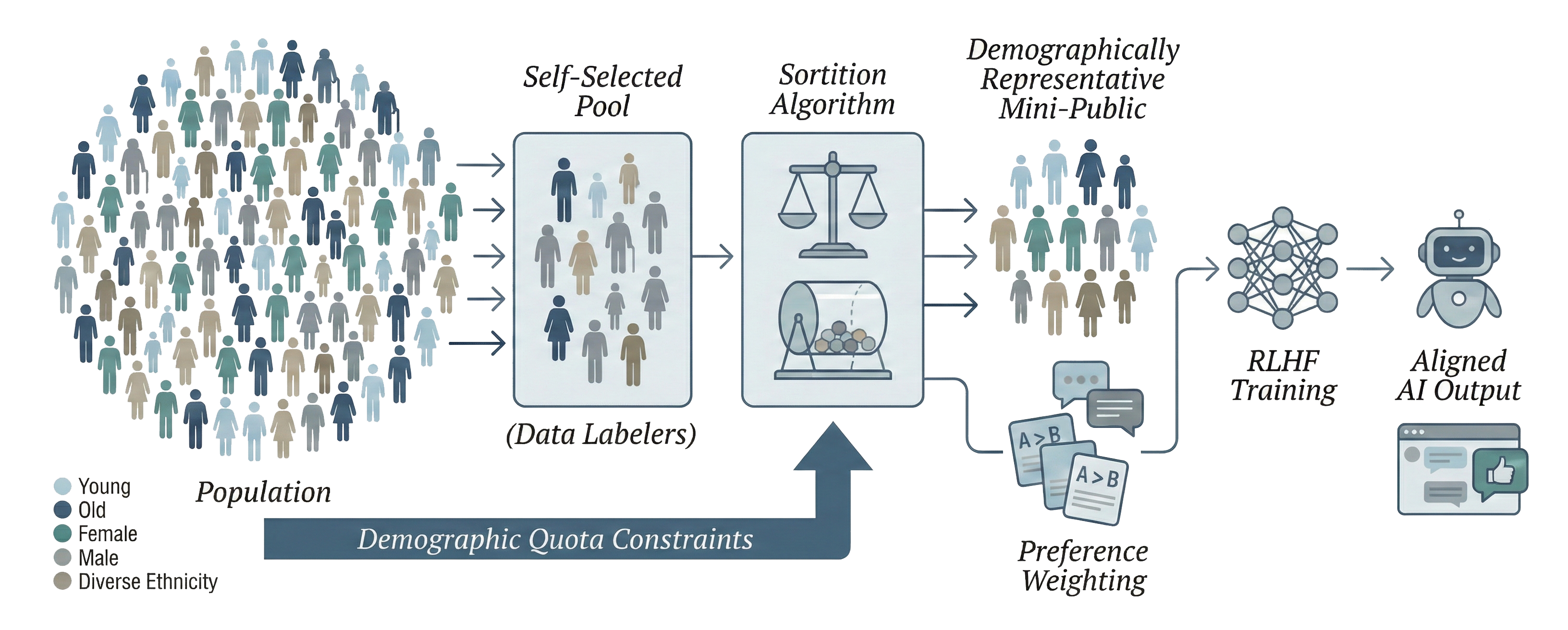}
  \caption{\textbf{The DemPO pipeline for democratic preference alignment.} A biased, self-selected pool of data labelers is transformed into a demographically representative mini-public via algorithmic sortition subject to population-derived quota constraints. Preferences from this representative panel (Hard Panel) or selection-probability-weighted preferences from all raters (Soft Panel) are then used for RLHF training, yielding AI systems aligned with broader public values.}
  \label{fig:pipeline}
\end{figure*}

We make four contributions:
\begin{itemize}
    \item \textbf{Sortition-weighted preference learning.} We introduce DemPO, a framework that operationalizes marginal demographic representativeness in preference-based post-training via quota-constrained algorithmic sortition \cite{flanigan2021fair}.
    \item \textbf{Hard and Soft Panel objectives.} We propose two training schemes: Hard Panel training on a single sampled mini-public, and Soft Panel training that weights each rater by their sortition inclusion probability. We give simple identities linking Soft Panel weighting to the expected Hard Panel objective.
    \item \textbf{Empirical evaluation of representative constitutional criteria.} Using PRISM \cite{kirk2024prism} and a constitution elicited from a representative panel of the United States \ \cite{huang2024collective}, we evaluate whether representativeness-aware training shifts the behavior of the model towards the panel-elicited criteria under multiple rank aggregation methods.
    \item \textbf{Model-size scaling and diagnostics.} We quantify how panel-based gains scale with model size and report update-magnitude diagnostics to ensure fair optimization budgets across conditions.
\end{itemize}

\subsection{Related Work}

\paragraph{Preference-based alignment and RLHF.}
RLHF \cite{christiano2017deep} aligns the models by optimizing a learned reward signal derived from human judgments. Many modern pipelines use direct preference objectives that avoid explicit reward-model training while still leveraging preference data, including DPO \cite{rafailov2023direct}. Constitutional AI \cite{bai2022constitutional} provides an alternative route that uses a written set of principles to guide the behavior of the model, often combined with preference learning. Our work is compatible with these methods: DemPO modifies how human feedback is selected or weighted before it is fed into a preference objective. Importance weighting is a standard technique in off-policy RL and has been applied to RLHF to correct for distribution shift between data collection and policy optimization \cite{munos2016safe,precup2000eligibility}; our soft-panel weighting can be viewed through this lens, where the weights encode demographic representativeness targets rather than policy mismatch.

\paragraph{Whose preferences and social choice perspectives.}
A growing body of work frames alignment as a problem of aggregation and collective decision-making, including questions about whose values should be represented and how to reconcile competing preferences \cite{gabriel2020artificial,conitzer2024social}. Complementing this normative framework, empirical analyzes study the’ language models of opinions of groups that tend to reflect \cite{santurkar2023whose}. Relatedly, \citet{buyl2025ai} argue that preference-based alignment inevitably grants annotators substantial alignment discretion in judging which outputs are ``better'' or ``safer,'' and propose metrics to measure when this discretion is exercised. Our contribution is an operational mechanism grounded in a standard governance tool: sortition-based mini-publics. We do not attempt to solve preference aggregation in full generality; instead, we study whether enforcing demographic representativeness in the feedback signal measurably shifts learned behavior toward criteria elicited from a representative panel \cite{huang2024collective}.

\paragraph{Democratic and diverse feedback collection.}
Democratic AI proposals have explored human-in-the-loop governance where decisions are shaped by group deliberation or majority vote \cite{koster2022human}. Other work studies preference heterogeneity directly by fine-tuning models to generate statements that maximize expected approval for a diverse group of people \cite{bakker2022agreement}. Related efforts emphasize collecting diverse feedback or encouraging diverse model outputs to improve robustness and reduce mode collapse \cite{lanchantin2025diverse}. DemPO targets a different failure mode: biased participation in feedback collection. We enforce representativeness at the level of \emph{who gets to contribute} to the optimization signal, either by selecting a quota-feasible panel or by reweighting all raters by their inclusion probabilities under the same sortition rule \cite{flanigan2021fair}.

\section{Methodology}
\label{sec:democratic-objectives}

Figure~\ref{fig:pipeline} summarizes the end-to-end pipeline from demographic targets to sortition-weighted preference optimization and evaluation.
We implement algorithmic sortition using the LEXIMIN procedure of \citet{flanigan2021fair}, which constructs a lottery over quota-feasible panels. Concretely, among all lotteries over feasible panels that satisfy the quota constraints, LEXIMIN chooses a lottery whose induced inclusion-probability vector $\pi=(\pi_i)_{i\in\mathcal{I}}$ is lexicographically maximal after sorting in ascending order (i.e., it maximizes the minimum inclusion probability, then the second-smallest, and so on). In this paper, we rely on two resulting properties that mirror standard citizens'-assembly intuitions: (i) \textbf{max--min protection} against near-zero inclusion probability for individuals in scarce demographic categories, and (ii) \textbf{symmetry within equivalence classes}, meaning that if two raters are indistinguishable with respect to the quota-relevant attributes, the procedure assigns them equal inclusion probabilities rather than arbitrarily favoring one over the other.

We formalize our setting using three levels: a target population, a pool of raters, and a panel produced via sortition.

\paragraph{Population, pool, and preference data.}
Let $\mathcal{P}$ denote the target population (e.g., U.S.\ adults), described by a set of demographic attributes
\[
A = \{A^{(1)},\dots,A^{(d)}\},
\]
where each attribute $A^{(t)}$ has a (finite) category set $\mathcal{A}^{(t)}$ (e.g., gender categories, age buckets, etc.).
For each attribute, we specify a \emph{marginal} population distribution
\[
p_{\text{pop}}^{(t)}(a) \in [0,1],
\; a \in \mathcal{A}^{(t)},
\; \sum_{a \in \mathcal{A}^{(t)}} p_{\text{pop}}^{(t)}(a) = 1.
\]
Importantly, we do \emph{not} assume a joint distribution over the full cross-product of categories; our quota constraints are enforced on these marginals (one attribute at a time).
We observe a finite \emph{pool} of raters
\[
\mathcal{I} = \{1,\dots,n\},
\]
where each rater $i \in \mathcal{I}$ has a demographic attribute vector
\[
a_i = \big(a_i^{(1)},\dots,a_i^{(d)}\big),
\; a_i^{(t)} \in \mathcal{A}^{(t)}.
\]
This pool is typically a biased self-selected subset of the population.
In our PRISM instantiation we further restrict $\mathcal{I}$ to raters who appear in preference data (PRISM utterances), so that sortition is run on the same pool that can contribute to the training objective; survey-only respondents with no preference data are excluded. Any rater with zero usable comparisons after pre-processing contributes nothing to $\mathcal{D}$.

From this pool, we obtain a dataset of pairwise preference comparisons
\[
\mathcal{D} = \{(x_j, y_j^+, y_j^-, r_j)\}_{j=1}^N,
\]
where $x_j$ is a prompt or context, $y_j^+$ and $y_j^-$ are two candidate responses, and $r_j \in \mathcal{I}$ is the rater who prefers $y_j^+$ to $y_j^-$. We consider preference-learning losses of the form
\[
\ell(\theta; x, y^+, y^-) \in \mathbb{R}_{\ge 0},
\]
where $\theta$ are the parameters of the conditional language model $\pi_\theta(\cdot \mid x)$. For example, under Direct Preference Optimization (DPO) the loss may be
\begin{align}
\ell_{\mathrm{DPO}}(\theta;x,y^+,y^-)
&= -\log \sigma\Big(
\beta \big[
\log \pi_\theta(y^+|x) \\
&- \log \pi_\theta(y^-|x) \nonumber - c(x,y^+,y^-)
\big] \Big),
\end{align}
where $\sigma$ is the logistic function, $\beta > 0$ is a temperature and $c(\cdot)$ is a baseline term derived from a reference policy.

\paragraph{Multi-turn contexts and pair construction (PRISM instantiation).}
While the abstract definition above allows $x_j$ to be any prompt/context, our implementation uses multi-turn conversational contexts.
For each conversation thread and turn $t$, we construct $x_t$ as the full message history up to that point by concatenating user messages with a single assistant message per prior turn.\footnote{To avoid inconsistent branching histories, we include only the chosen response for each past turn; if a turn has no explicit chosen marker, we fall back to the highest-scoring response.}
At the current turn $t$, PRISM provides multiple candidate model completions with scalar scores; we generate pairwise comparisons by taking all ordered pairs whose score difference exceeds a threshold $\delta$ (we use $\delta=0$ in experiments), yielding many comparisons per turn while keeping the history coherent.
We interpret PRISM scores only within the same conversation turn (same rater and context) and do not assume calibration across different turns or threads; setting $\delta=0$ includes even small score gaps, which may be noisy, but provides a high-coverage training signal and can be tightened in future work by increasing $\delta$.
As a practical data-quality safeguard, we drop placeholder values such as ``EMPTY STRING'' before constructing contexts or preference pairs.

\paragraph{Sortition over panels.}
A \emph{panel} is any subset $S \subseteq \mathcal{I}$ of raters of fixed size $|S| = k$ that satisfies a collection of demographic quota constraints. Concretely, for each attribute $t \in \{1,\dots,d\}$ and category $a \in \mathcal{A}^{(t)}$, we specify the lower and upper bounds
\[
L_{t,a} \le \#\{ i \in S : a_i^{(t)} = a\} \le U_{t,a},
\]
chosen so that the panel approximately satisfies the quota constraints within some tolerance. Let $\mathcal{S}$ denote the family of all such feasible panels.

Following work on algorithmic sortition \cite{flanigan2021fair}, we assume access to a lottery over panels constructed using the Sortition Foundation LEXIMIN procedure
\[
\pi_{\mathrm{panel}} : \mathcal{S} \to [0,1],
\; 
\sum_{S \in \mathcal{S}} \pi_{\mathrm{panel}}(S) = 1,
\]
constructed to satisfy the quota constraints while optimizing a fairness objective over individuals (e.g., maximizing the minimum selection probability). This lottery induces a \emph{selection probability} for each rater $i \in \mathcal{I}$,
\begin{equation}
\label{eq:selection-prob}
\pi_i \;:=\; \Pr_{S \sim \pi_{\mathrm{panel}}}[ i \in S ]
\;=\; \sum_{S \in \mathcal{S}: i \in S} \pi_{\mathrm{panel}}(S).
\end{equation}

We record two basic identities of these inclusion probabilities (including $\sum_i \pi_i = k$ and a link between Soft Panel weighting and the expected Hard Panel objective) in Appendix~\ref{app:basic-identities}.

We now define two training schemes that use $\pi_{\mathrm{panel}}$ and the induced $\{\pi_i\}$ to implement democratic objectives.

\subsection{Hard Panel Training}
\label{sec:hard-panel}

In the \emph{hard panel} scheme, we mirror the practice of convening a single mini-public by sortition.
We first draw one panel
\begin{equation}
\label{eq:sample-panel}
S \sim \pi_{\mathrm{panel}}(\cdot), \quad S \in \mathcal{S}, \quad |S| = k,
\end{equation}
and then restrict training to the preference data supplied by raters in $S$.

Formally, define the panel-restricted dataset
\begin{equation}
\label{eq:panel-dataset}
\mathcal{D}_S := \{ (x_j, y_j^+, y_j^-, r_j) \in \mathcal{D} : r_j \in S \}.
\end{equation}
The hard panel training objective is then
\begin{equation}
\label{eq:hard-objective}
\mathcal{L}_{\mathrm{hard}}(\theta \mid S)
\;=\;
\frac{1}{|S|}
\sum_{i \in S}
\frac{1}{N_i}
\sum_{j : r_j = i}
\ell\big(\theta; x_j, y_j^+, y_j^-\big).
\end{equation}

This objective treats the sampled panel as a \emph{democratic jury}: only panel members’ preferences
contribute to updates. Relative to training on the full rater pool, this can reduce bias induced by an
unrepresentative annotator population, but it also reduces the effective dataset size and can increase
optimization variance. In principle, this variance could be reduced by averaging across multiple
independently sampled panels or by explicitly optimizing an expectation over $S$; however, in our
main experiments we follow the citizen-assembly analogy and train on a single sampled panel.
The per-rater normalization by $N_i$ ensures each panel member contributes equally regardless of
how many comparisons they provided.

To connect hard-panel training to the soft weighting scheme below, define the per-rater mean loss
\[
L_i(\theta) \;:=\; \frac{1}{N_i}\sum_{j:r_j=i}\ell(\theta; x_j, y_j^+, y_j^-).
\]
Then \eqref{eq:hard-objective} is $\mathcal{L}_{\mathrm{hard}}(\theta\mid S)=\frac{1}{k}\sum_{i\in S}L_i(\theta)$. Taking expectations and using \eqref{eq:selection-prob} gives
\[
\mathbb{E}_{S \sim \pi_{\mathrm{panel}}}\!\left[\mathcal{L}_{\mathrm{hard}}(\theta\mid S)\right]
=
\frac{1}{k}\sum_{i=1}^n \pi_i\,L_i(\theta),
\]
which matches the soft objective \eqref{eq:soft-objective} when $w_i=\pi_i$ and $Z=\sum_i w_i=k$.

\subsection{Soft Panel Weighting}
\label{sec:soft-panel}

The hard panel scheme discards preference data from raters who are not selected into $S$.
As an alternative, we propose a \emph{soft} scheme that retains all preference comparisons but weights
each rater’s contribution according to their democratic selection probability $\pi_i$.

We define nonnegative per-rater weights
\begin{equation}
\label{eq:rater-weights}
w_i = f(\pi_i), \; i \in \mathcal{I},
\end{equation}
where $f : [0,1] \to \mathbb{R}_{\ge 0}$ is a monotone transformation.
Typical choices include $f(\pi_i)=\pi_i$ (direct weighting by inclusion probability), normalization by
total mass (equivalent here since $\sum_i \pi_i = k$), or clipped variants to control variance when
weights are highly concentrated.

The resulting \emph{soft democratic} training objective is
\begin{equation}
\label{eq:soft-objective}
\mathcal{L}_{\mathrm{soft}}(\theta)
\;=\;
\frac{1}{Z}
\sum_{i=1}^n
w_i
\frac{1}{N_i}
\sum_{j : r_j = i}
\ell\big(\theta; x_j, y_j^+, y_j^-\big),
\;
Z=\sum_{i=1}^n w_i.
\end{equation}
We apply two normalizations. First, we normalize each rater’s contributions by $N_i$, the number of
preference comparisons supplied by rater $i$ (typically $N_i > 1$ in datasets such as PRISM), so that
each rater contributes equally within the hard panel and proportionally to $w_i$ in the soft panel,
regardless of annotation volume. Second, to keep the \emph{optimization scale} comparable to
standard (unweighted) DPO under minibatch training, we compute the weighted loss as a
\emph{self-normalized weighted mean} within each minibatch:
\[
\widehat{\mathcal{L}}_{\mathrm{batch}}(\theta)
=
\frac{\sum_{b \in \mathcal{B}} \omega_b\,\ell_b(\theta)}{\sum_{b \in \mathcal{B}} \omega_b},
\qquad
\omega_b = \frac{w_{r_b}}{N_{r_b}},
\]
where $\ell_b(\theta)$ denotes the per-example preference loss for batch element $b$ and $\omega_b$
is the corresponding per-example weight (including the $1/N_i$ normalization for rater $r_b$). This
makes optimization invariant to global rescaling of all weights and empirically
stabilizes update magnitudes across training variants. In distributed training, we compute the
denominator using the sum of weights across all devices in the global batch.
Equivalently, the population objective in \eqref{eq:soft-objective} can be viewed as sampling a rater
$i$ with probability proportional to $w_i$ and then sampling a comparison uniformly from that
rater’s set of comparisons, so that the per-rater normalization by $N_i$ implements ``one person,
one voice'' while the rater weights encode representativeness targets.

With $w_i=\pi_i$, the soft objective matches the expected hard-panel objective and yields a
self-normalized estimator that can be interpreted as training over a mixture of many possible
mini-publics drawn from the same sortition lottery, while using all available data for lower-variance
updates.

The LEXIMIN sortition procedure defines the lottery $\pi_{\mathrm{panel}}$ over feasible panels; from
this lottery we obtain per-rater inclusion probabilities $\pi_i$ via \eqref{eq:selection-prob}. In
practice, $\{\pi_i\}$ can be computed exactly from the optimization solution underlying the LEXIMIN
sortition lottery (we do so in all primary experiments), or estimated by Monte Carlo sampling of panels
and taking empirical inclusion frequencies (used only for LEGACY/random baselines). We use
the same $\{\pi_i\}$ for both hard-panel sampling and soft-panel weighting to ensure that the two
schemes implement the same democratic target. We report selection-probability diagnostics and concentration statistics for $\{\pi_i\}$ in Appendix~\ref{app:soft-weights}.

\section{Experiments}
\label{sec:experiments}

We evaluate whether sortition-based democratic training changes downstream behavior compared to standard preference fine-tuning on the full pool of PRISM raters and the US-Rep subset.

\subsection{Data: PRISM Preferences and Demographics}
\label{sec:data-prism}
We use the PRISM Alignment dataset \cite{kirk2024prism}, which contains multi-turn conversations with model responses, human preferences/ratings, and rater demographics.
Demographic metadata of this granularity are rarely available in public preference datasets, making it difficult to study or enforce representativeness constraints in most RLHF settings; PRISM enables these analyses by pairing preference data with annotator attributes.
From PRISM we use: (i) \texttt{conversations.jsonl} and \texttt{utterances.jsonl} to build multi-turn preference pairs and (ii) \texttt{survey.jsonl} to obtain demographic attributes for sortition.
We filter the sortition pool to raters who appear in PRISM preference data (excluding survey-only respondents with no utterances), and we remove placeholder ``EMPTY STRING'' prompts/responses before constructing histories or comparisons.
Our pre-processing reconstructs multi-turn contexts at the turn level. For each conversation and turn $t$, we build
the context $x_t$ as the full history of user messages and the single chosen (or highest-scoring) assistant response
per prior turn, yielding a coherent trajectory; if a prior turn is missing in the conversation history, we fall back
to the turn-level prompt when available (best-effort history).
We then collect all the responses of the candidate model in turn $t$ and create
pairwise preferences for every ordered pair with a score difference that exceeds a threshold $\delta$ (using $\delta=0$).
This yields a multi-turn preference dataset that preserves conversational context (see Appendix~\ref{app:data-sizes}
for dataset sizes). The resulting training distribution differs across conditions, which can induce different update magnitudes even when the per-step loss scale and the total number of optimizer updates are matched; we quantify these effects in Appendix~\ref{app:update-magnitude}.

\subsection{Sortition Targets and Panel Configuration}
\label{sec:panel-config}
Our target population is U.S.\ adults. We define quota constraints over seven attributes: ethnicity, religion, marital status, education, employment status, gender, and age.
Target proportions are set from 2020-era U.S.\ population estimates: for attributes available in the Census/ACS (e.g., ethnicity, age, gender, education, employment status, and marital status) we use ACS estimates \cite{uscensus2020acs}, and for religious affiliation we use Pew Research Center estimates from a nationally representative survey \cite{pew2021unaffiliated}.
We use the panel size $k=145$ and a relative tolerance $\tau=0.1$, implemented as bounds $[p(1-\tau),\,p(1+\tau)]$ for each proportion of categories; $k=145$ is the largest panel size that remains feasible under these constraints.
To avoid infeasibility due to sparse or ambiguous survey responses, we treat ``Other'' and ``Prefer not to say'' as \emph{slack} categories with no quota bounds.
Unless otherwise noted, we do not filter the rater pool by study locale; instead, panels are selected from the full PRISM rater pool and constrained to match U.S.\ demographic targets. \footnote{In our extracted pairs, the resulting hard panel contains raters from multiple locales; see Appendix~\ref{app:limitations-locale}.} Additional feasibility checks and quota-satisfaction diagnostics are reported in Appendix~\ref{app:panel-feasibility}.

\subsection{Models and Training}
\label{sec:training-setup}
We train variants of \texttt{meta-llama/Llama-3.1-8B} \cite{aimeta2024llama31modelcard} (Base) using Direct Preference Optimization (DPO) \cite{rafailov2023direct}, and replicate the same procedure for \texttt{meta-llama/Llama-3.2-1B} and \texttt{meta-llama/Llama-3.2-3B} to study scaling behavior.
We deliberately start from the \emph{base} (non-instruction-tuned) checkpoint to isolate the effect of democratic preference alignment: instruction-tuned models already incorporate additional supervised alignment data, system prompts, and conversational behaviors that would confound attribution of gains to sortition-weighted preference training.
A caveat is that base checkpoints can be less fluent in chat-style interaction; accordingly, we evaluate all models with a shared chat template and interpret the results as relative changes induced by preference training rather than absolute chatbot quality.
We use full-parameter fine-tuning (no LoRA), bfloat16, and a frozen reference model initialized from the same base checkpoint.
We convert each PRISM interaction into a conversational format with multi-turn history and rely on the model's chat template during training (``User: \dots'' $\rightarrow$ ``Assistant: \dots'').
Across all runs we use $\beta=0.1$ and learning rate $5\!\times\!10^{-6}$, and we match the optimization budget by training each fine-tuned model for the same number of optimizer updates: $3538$ steps, which corresponds to $2$ epochs on the Full PRISM training set. For smaller datasets (e.g., Hard Panel) we resample training pairs with replacement so that all conditions and model sizes receive the same number of updates.

\paragraph{Training conditions.}
We compare five models:
\begin{itemize}
  \item \textbf{Base}: no fine-tuning.
  \item \textbf{Full PRISM}: DPO on all PRISM pairs.
  \item \textbf{US-Rep}: DPO on the PRISM subset flagged as representative of U.S.\ raters (PRISM's \texttt{included\_in\_US\_REP}); this is a stratified subset provided by PRISM rather than a sortition-based panel.
  \item \textbf{Hard Panel}: compute a lottery over feasible panels using the LEXIMIN sortition algorithm \cite{flanigan2021fair} (implemented as an optimization problem solved with Gurobi), sample one panel from this lottery, and train only on pairs from selected raters, with per-rater normalization by $N_i$.
  \item \textbf{Soft Panel}: compute per-rater selection probabilities $\{\pi_i\}$ induced by the same LEXIMIN lottery and weight each preference pair by the rater's $\pi_i$, with per-rater normalization by $N_i$ and per-minibatch self-normalization (weighted mean) to keep update scale comparable to unweighted DPO. Pairs with $\pi_i=0$ are dropped before training.
\end{itemize}

\begin{figure*}[t]
  \centering
  \includegraphics[width=\textwidth]{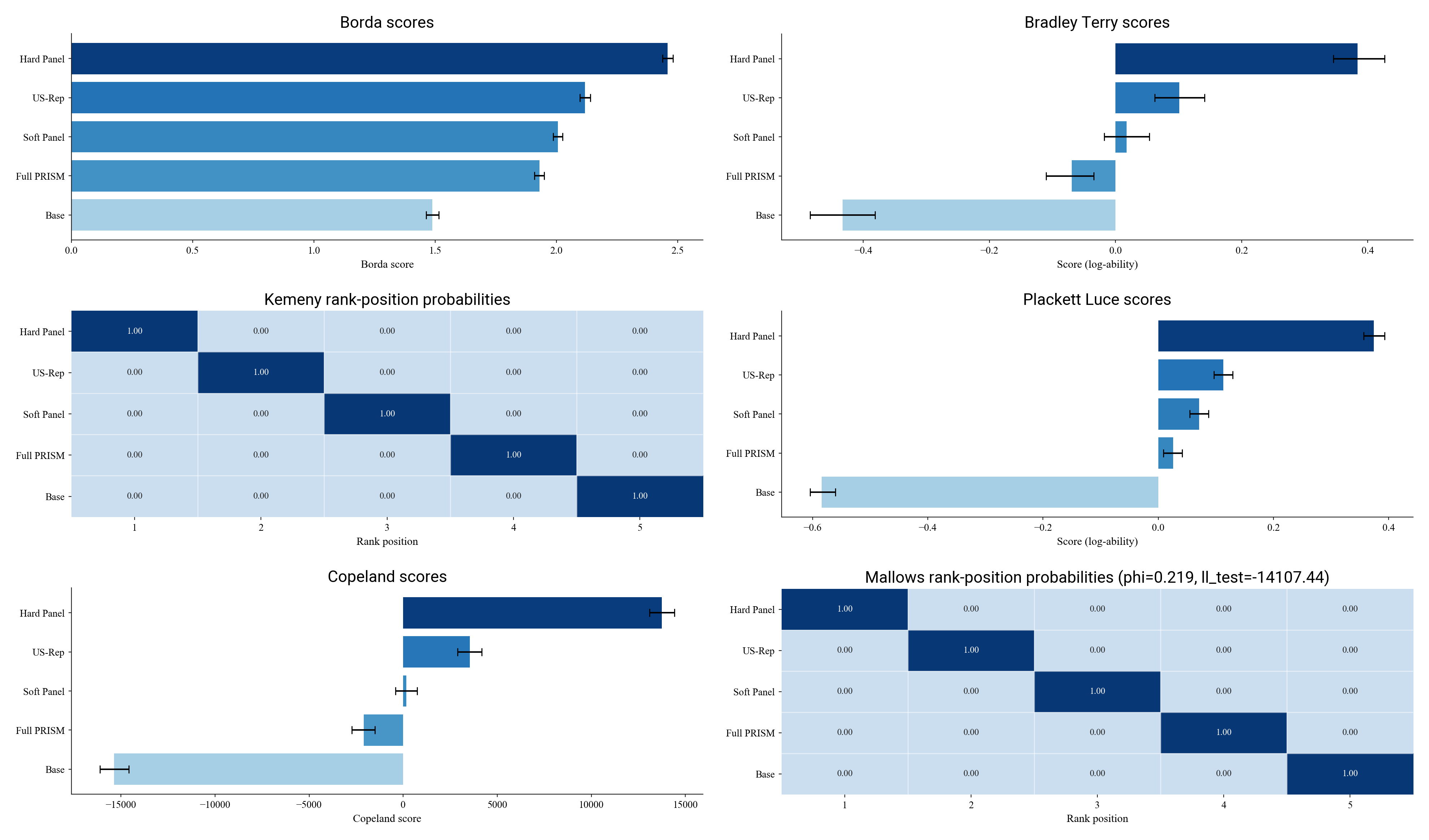}
  \caption{\textbf{Model ranking under multiple aggregation methods (Llama-3.1-8B).}
  Left: Borda and Copeland scores with 95\% bootstrap confidence intervals, and Kemeny consensus summarized as rank-position probabilities under bootstrap resampling.
  Right: Bradley--Terry and Plackett--Luce log-ability scores with 95\% bootstrap confidence intervals, and Mallows (Kendall) rank-position probabilities under bootstrap resampling (with fitted $\phi$ and held-out $\ell_{\text{test}}$).
  All bootstrap summaries use $n{=}1000$ resamples.
  Across BT, Borda, Copeland, Kemeny, and Mallows, Hard Panel ranks highest, US-Rep ranks second, Soft Panel ranks above the Full PRISM baseline, and the Base model is consistently worst.}
  \label{fig:score-grid-8b}
\end{figure*}

\subsection{Constitutional Evaluation Protocol}
\label{sec:eval-protocol}
We evaluate models against the collective constitutional principles introduced by \citet{huang2024collective}, which provides a 75-clause constitution derived from structured public input. In their study, approximately $1{,}000$ U.S.\ adults were recruited to be broadly representative across age, gender, income, and geography. The participants proposed and voted on candidate rules through an online interface, contributing statements $1{,}127$ and casting $38{,}252$ votes.
This constitution is a natural external evaluation target for our setting: our sortition constraints explicitly aim to approximate U.S.\ adult demographics, and the constitution aggregates normative preferences elicited from a representative U.S.\ public rather than from the PRISM rater pool. Using a public, deliberative process also reduces the risk that evaluation criteria simply mirror the idiosyncrasies of the PRISM rater pool.
For each clause, we generate $40$ standalone questions using GPT-5.2, producing $75\times 40 = 3000$ evaluation prompts.\footnote{The question-generation prompt explicitly forbids references to prior turns or missing context.}
Each candidate model answers every question using greedy decoding with a shared chat template, and we apply stop rules to prevent the model from continuing into additional dialogue turns.

\paragraph{LLM-as-a-judge.}
We use GPT-5.2 as an automated judge to rank all model responses listwise (best-to-worst) for each (clause, question) pair. Using strong LLMs as judges is a scalable proxy for human evaluation in open-ended settings and can achieve agreement with human preferences comparable to inter-human agreement on established benchmarks \cite{zheng2023judging}.
This approach is particularly appropriate here because the constitution specifies abstract normative principles rather than single “correct” answers, and listwise ranking provides a direct way to compare competing responses to the same prompt. We mitigate judge-model idiosyncrasies by collecting multiple independent rankings, randomizing response order on each call, and reporting agreement statistics and confidence intervals.
We collect $5$ independent judge rankings per question, randomizing the order of responses shown to the judge on each trial to mitigate position bias \cite{shi2025judging} (Appendix~\ref{app:ablations}).
This yields $3000\times 5 = 15{,}000$ complete rankings.
For pairwise analyses, we convert listwise rankings into pairwise wins and compute both (i) vote-level win rates (across all $5$ judge rankings) and (ii) majority winners per question; Bradley--Terry is fit to the resulting pairwise preferences (30,000 total in the 8B setting).

\subsection{Results}
\label{sec:results}

We score the resulting rankings using multiple rank aggregation models: Borda and Copeland scores \cite{borda1781memoire,copeland1951reasonable}, a Kemeny-Young consensus ranking \cite{kemeny1962mathematical,young1988condorcet}, Bradley--Terry (BT) fitted to pairwise preferences \cite{bradley1952rank}, Plackett--Luce (PL) fitted to listwise rankings \cite{plackett1975analysis,luce1959individual}, and a Mallows (Kendall) model with held-out likelihood evaluation \cite{mallows1957nonnull}.
We use $1000$ bootstrap resamples to compute confidence intervals for BT/PL and Borda/Copeland, and to quantify uncertainty in the Kemeny and Mallows rank-position summaries. For numerical reproducibility, raw score tables for each aggregation method are provided in Appendix~\ref{app:raw-scores}.

\paragraph{Main effect: democratic panels improve over the full-pool baseline.}
Across the scoring families in \cref{fig:score-grid-8b} (BT, Borda, Copeland, Kemeny, Mallows), Hard Panel ranks highest, US-Rep ranks second, and Soft Panel consistently improves over Full PRISM.
This suggests that enforcing demographic representativeness at training time, either by sampling a representative mini-public (Hard Panel) or by weighting all raters by their selection probability (Soft Panel), yields measurable improvements over standard preference training on the full rater pool.
US-Rep's strong performance indicates that demographic filtering alone can yield substantial gains over training on the full, unfiltered pool. However, Hard Panel's consistent improvement over US-Rep suggests that how representativeness is achieved matters, not just whether aggregate demographic targets are met. We hypothesize two mechanisms. First, LEXIMIN's max-min fairness property ensures that no individual in a scarce demographic intersection has near-zero inclusion probability, potentially providing better coverage of underrepresented viewpoints that a fixed stratified sample might miss. Second, LEXIMIN's symmetry property assigns equal inclusion probability to demographically equivalent individuals, reducing the influence of idiosyncratic annotator effects that could arise when a fixed subset is selected by other criteria (e.g., response rate or annotation volume). US-Rep, by contrast, is a pre-constructed stratified subset whose selection procedure may optimize for different goals. Disentangling these mechanisms—and testing whether the gap persists under alternative sortition algorithms—is a direction for future work.
Although the Hard Panel and US-Rep variants exhibit larger realized parameter updates under the matched budget $3538$, we verify that the ranking is not an artifact of “learning more” by retraining the Hard Panel and US-Rep for fewer steps to match the magnitude of Full/Soft update (Appendix~\ref{app:update-magnitude}). The ordering remains unchanged, while the score gaps shrink modestly (e.g., the Soft--Full Borda gap drops from $0.076$ to $0.066$ and the Hard--US-Rep gap drops from $0.341$ to $0.315$).

\paragraph{Scaling with model size.}
\begin{figure}[t]
  \centering
  \includegraphics[width=0.95\columnwidth]{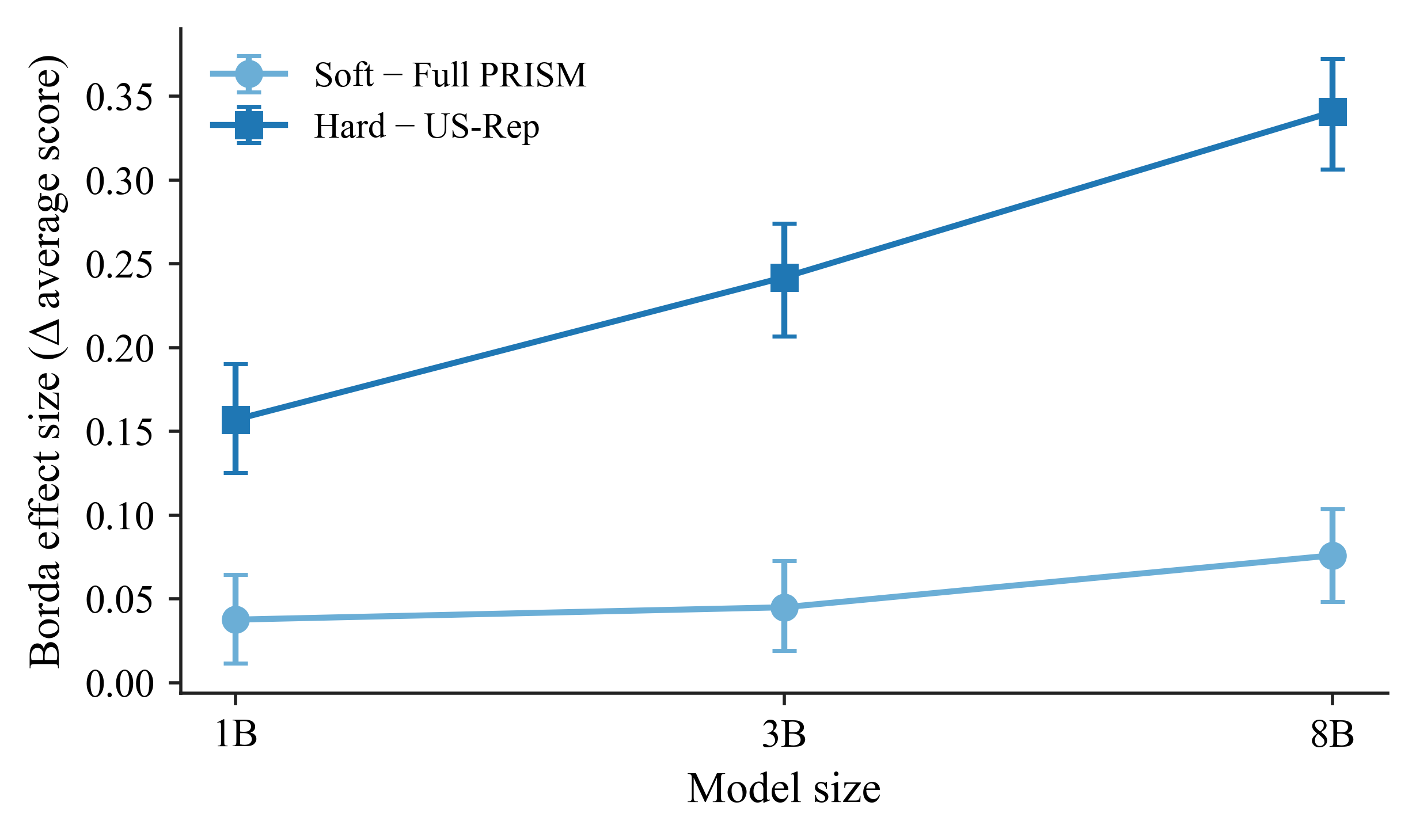}
  \caption{\textbf{Panel advantage grows with model size.} Effect sizes are computed from bootstrap resampling of listwise rankings (Borda average score differences). The Soft Panel vs. Full PRISM gap and the Hard Panel vs. US-Rep gap both increase from 1B to 3B to 8B. All error bars are 95\% bootstrap CIs ($n{=}1000$).}
  \label{fig:model-size-effect}
\end{figure}
We repeat the same evaluation protocol for Llama-3.2-1B and Llama-3.2-3B and observe the same ordering as the 8B results (Hard $>$ US-Rep $>$ Soft $>$ Full $>$ Base). Figure~\ref{fig:model-size-effect} summarizes effect sizes computed from bootstrap resampling of listwise rankings using Borda average score differences. Across these checkpoints (Llama-3.2 at 1B/3B and Llama-3.1 at 8B), the panel advantage grows with model size: the Soft--Full gap increases from $0.038$ (1B) to $0.045$ (3B) to $0.076$ (8B), and the average panel advantage (mean of Hard/Soft minus mean of Full/US-Rep) increases monotonically as well (Appendix~\ref{app:model-size}).

\paragraph{Judge reliability.}
To validate that the rankings are not dominated by noise from the judges, we measure the agreement between the $5$ judge rankings per question.
Across all questions, average inter-judge Kendall $\tau$ is $0.776$ and Fleiss' $\kappa$ on the \#1 choice is $0.710$, indicating substantial agreement.
We treat the resulting $15{,}000$ rankings as repeated measurements and use majority vote (or vote counts) in derived pairwise analyses.

\paragraph{Pairwise win rates.}
We compute question-level bootstrap win probabilities for every model against every baseline, reported as the vote-level win rate $\Pr(\text{model beats baseline})$ (bootstrap $n{=}1000$).
For Llama-3.1-8B, Soft beats Full with probability $0.520$ (95\% CI $[0.506,0.534]$), Hard beats Full with probability $0.618$ (95\% CI $[0.602,0.634]$), and Hard beats US-Rep with probability $0.576$ (95\% CI $[0.561,0.592]$).
This pairwise view complements the rank-aggregation picture in \cref{fig:score-grid-8b} by quantifying effect sizes in a common metric.

\paragraph{Limitations.}
Our U.S.\ demographic targets are applied to the PRISM rater pool, which includes raters from multiple locales, and some attributes (e.g., religion) require survey-based rather than Census-based population targets.
We treat this as a practical compromise to avoid infeasibility and report locale composition in Appendix~\ref{app:limitations-locale}.
More broadly, DemPO relies on preference datasets that record annotator demographics; PRISM is currently unusual in making such metadata available, and scaling these methods will require future preference data collection efforts to record (and govern) demographic attributes responsibly.

\section{Conclusion}
We introduced \emph{DemPO}, a democratic preference-optimization pipeline that uses algorithmic sortition to enforce demographic representativeness in preference-based post-training.
Using PRISM human preferences and demographic metadata, we instantiate two democratic training schemes: a \emph{Hard Panel} that trains on preferences from a representative mini-public sampled via sortition, and a \emph{Soft Panel} that reweights all preferences by the rater's selection probability under the same sortition lottery.
Across multiple model sizes and a constitution derived from a representative U.S.\ panel, Hard Panel and US-Rep rank above Soft and Full PRISM, while Soft consistently improves over Full PRISM under multiple aggregation families.

Overall, these results support the hypothesis that explicitly incorporating representativeness constraints into preference learning can shift aligned behavior toward values elicited from representative publics.
Important limitations include reliance on a single base model family, an automated judge, and practical compromises around locale filtering and census-incomplete attributes.
Future work should scale DemPO to larger models and human judging, evaluate locale-specific panels when sufficient data is available, and explore richer constraint families beyond marginal quotas.


\section*{Software and Data}
All scripts, configurations, and analysis code used in our experiments are provided in the accompanying repository.
Our evaluation questions and aggregate evaluation artifacts can be reproduced by running the pipeline described in Section~\ref{sec:experiments} and Appendix~\ref{app:reproducibility}.
Repository: \url{https://github.com/Wu-Jinzhou/democratic-llm}.


\section*{Impact Statement}
This work aims to make preference-based alignment more representative of the broader public by replacing ad hoc rater pools with panels (or weights) generated by a transparent, quota-constrained sortition procedure.
If deployed responsibly, such methods could reduce systematic misalignment that arises when feedback disproportionately reflects the values of a narrow, self-selected group, and could improve the legitimacy of alignment decisions for public-facing systems.

At the same time, demographic representativeness is not a complete notion of democratic legitimacy: quotas are imperfect proxies for values, population targets are approximate, and the resulting systems may still under-represent important viewpoints not captured by our attributes.
Moreover, using demographic data in model training and evaluation raises privacy and misuse concerns; we therefore rely only on de-identified rater IDs and public aggregate statistics, treat ambiguous categories as slack, and recommend strong governance and transparency when applying such techniques in practice.
Finally, LLM-as-a-judge evaluations can inherit biases from the judge model; we mitigate this with repeated judging, randomized response ordering, and diagnostic analyzes, but human oversight remains important before drawing high-stakes conclusions.

\bibliography{references}
\bibliographystyle{icml2026}

\newpage
\appendix
\onecolumn
\raggedbottom

\section{Additional Experimental Details}

\subsection{Basic Identities for Inclusion Probabilities}
\label{app:basic-identities}

\begin{lemma}[Inclusion probabilities sum to the panel size]
\label{lem:pi-sum-k}
Let $S \sim \pi_{\mathrm{panel}}$ be a fixed size random panel $|S|=k$, and let $\pi_i = \Pr[i\in S]$. Then $\sum_{i\in\mathcal{I}} \pi_i = k$.
\end{lemma}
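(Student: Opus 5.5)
The plan is to write the panel size as a sum of indicator variables and apply linearity of expectation. For a random panel $S \sim \pi_{\mathrm{panel}}$, introduce for each rater $i \in \mathcal{I}$ the indicator $\mathbf{1}[i \in S]$. Since every panel in the support $\mathcal{S}$ has exactly $k$ members, we have the deterministic identity
\[
\sum_{i\in\mathcal{I}} \mathbf{1}[i\in S] \;=\; |S| \;=\; k
\]
for every $S \in \mathcal{S}$. Taking expectations with respect to $\pi_{\mathrm{panel}}$ makes the right-hand side equal to $k$, because $\pi_{\mathrm{panel}}$ sums to one over $\mathcal{S}$. On the left, linearity of expectation gives $\sum_i \mathbb{E}[\mathbf{1}[i\in S]] = \sum_i \Pr[i\in S] = \sum_i \pi_i$, which is exactly the definition in \eqref{eq:selection-prob}.

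For a purely combinatorial version that avoids probabilistic language, I would start from \eqref{eq:selection-prob} and write
\[
\sum_{i\in\mathcal{I}}\pi_i=\sum_{i\in\mathcal{I}}\sum_{S\in\mathcal{S}:\,i\in S}\pi_{\mathrm{panel}}(S).
\]
I would then exchange the order of summation to obtain $\sum_{S\in\mathcal{S}}\pi_{\mathrm{panel}}(S)\,|S| = k\sum_{S}\pi_{\mathrm{panel}}(S)=k$. The exchange is justified because $\mathcal{I}$ and $\mathcal{S}$ are finite.

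There is no real obstacle. The only point to check is that the fixed-size property $|S|=k$ holds for every panel in the support of the lottery. This is built into the definition of a feasible panel in $\mathcal{S}$. If a lottery could place mass on panels of other sizes, the argument would instead give $\sum_i \pi_i=\mathbb{E}|S|$.
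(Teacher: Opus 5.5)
Your proof is correct and takes the same approach as the paper: write $|S|=\sum_i \mathbf{1}\{i\in S\}=k$ and take expectations using linearity. Your double-sum exchange is the same computation written without probabilistic language, and your remark that the argument in general gives $\mathbb{E}|S|$ correctly identifies the only hypothesis being used.
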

\begin{proof}
Let $\mathbf{1}\{i\in S\}$ be the indicator that rater $i$ is included. Then $|S|=\sum_i \mathbf{1}\{i\in S\}=k$ almost surely. Taking expectations and using linearity,
$\sum_i \Pr[i\in S]=\sum_i \mathbb{E}[\mathbf{1}\{i\in S\}]=\mathbb{E}[|S|]=k$.
\end{proof}

\begin{lemma}[Soft weights recover the expected hard-panel objective]
\label{lem:soft-equals-expected-hard}
Let $L_i(\theta)$ denote the per-rater mean loss
\[
L_i(\theta) \;:=\; \frac{1}{N_i}\sum_{j:r_j=i}\ell(\theta;x_j,y_j^+,y_j^-),
\]
and recall the hard-panel objective for a fixed-size panel $|S|=k$,
\[
\mathcal{L}_{\mathrm{hard}}(\theta\mid S)=\frac{1}{k}\sum_{i\in S}L_i(\theta).
\]
Define the soft objective as a rater-weighted mean,
\[
\mathcal{L}_{\mathrm{soft}}(\theta)=\frac{1}{\sum_i w_i}\sum_{i=1}^n w_i\,L_i(\theta).
\]
Then with $w_i=\pi_i$ (and $\sum_i \pi_i = k$), we have
\[
\mathbb{E}_{S \sim \pi_{\mathrm{panel}}}\!\left[\mathcal{L}_{\mathrm{hard}}(\theta\mid S)\right]
=
\mathcal{L}_{\mathrm{soft}}(\theta).
\]
\end{lemma}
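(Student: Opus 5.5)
The plan is to rewrite the hard-panel objective as a sum over the whole pool using inclusion indicators, then take expectations term by term. For a fixed panel $S$,
\[
\mathcal{L}_{\mathrm{hard}}(\theta\mid S)=\frac{1}{k}\sum_{i=1}^n \mathbf{1}\{i\in S\}\,L_i(\theta).
\]
Here $\theta$ is held fixed, so each $L_i(\theta)$ is a deterministic nonnegative real number. The only randomness is in the indicators.

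Next, I would apply linearity of expectation over the finite sum. Since $\mathcal{S}$ is finite, this needs no integrability argument. Using $\mathbb{E}[\mathbf{1}\{i\in S\}]=\Pr[i\in S]=\pi_i$, which is the definition in \eqref{eq:selection-prob}, this gives
\[
\mathbb{E}_{S\sim\pi_{\mathrm{panel}}}\bigl[\mathcal{L}_{\mathrm{hard}}(\theta\mid S)\bigr]=\frac{1}{k}\sum_{i=1}^n \pi_i\,L_i(\theta).
\]
An equivalent route is to write the expectation as $\sum_{S\in\mathcal{S}}\pi_{\mathrm{panel}}(S)\frac1k\sum_{i\in S}L_i(\theta)$ and exchange the order of summation. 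That exchange produces exactly the sum $\sum_{S\ni i}\pi_{\mathrm{panel}}(S)=\pi_i$.

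Finally, with $w_i=\pi_i$, Lemma~\ref{lem:pi-sum-k} gives $\sum_i w_i=k$. So the normalizer of $\mathcal{L}_{\mathrm{soft}}$ equals the $1/k$ factor in the hard objective, and the two expressions coincide.

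There is no real obstacle. The one point to check is that every rater in the sum has a well-defined $L_i$, i.e. $N_i\ge 1$. This holds because the pool is restricted to raters with usable comparisons. Alternatively, any rater with $\pi_i=0$ or no data contributes zero to both sides, so the convention chosen for such raters does not affect the identity.
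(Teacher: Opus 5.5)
Your proposal is correct and matches the paper's proof: you write the hard objective with inclusion indicators, apply linearity of expectation to get $\frac{1}{k}\sum_i \pi_i L_i(\theta)$, and then use Lemma~\ref{lem:pi-sum-k} to replace $k$ with $\sum_i \pi_i$. Your side remark on raters with $N_i=0$ is harmless but slightly loose, since such a rater contributes zero only under the convention $L_i=0$; the identity still holds under any convention, because that $L_i$ enters both sides with the same coefficient $\pi_i/k$.
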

\begin{proof}
Taking expectations and using linearity,
\[
\mathbb{E}_{S}\!\left[\mathcal{L}_{\mathrm{hard}}(\theta\mid S)\right]
=
\mathbb{E}_{S}\!\left[\frac{1}{k}\sum_{i\in S}L_i(\theta)\right]
=
\frac{1}{k}\sum_{i=1}^n \Pr[i\in S]\,L_i(\theta)
=
\frac{1}{k}\sum_{i=1}^n \pi_i\,L_i(\theta).
\]
By Lemma~\ref{lem:pi-sum-k}, $\sum_i \pi_i = k$, so this is exactly
$\frac{1}{\sum_i \pi_i}\sum_i \pi_i\,L_i(\theta)=\mathcal{L}_{\mathrm{soft}}(\theta)$.
\end{proof}

\subsection{Training Dataset Sizes}
\label{app:data-sizes}
Table~\ref{tab:data-sizes} summarizes the number of DPO preference pairs produced by our PRISM pre-processing pipeline for each training condition.
Counts are computed after converting PRISM interactions into (prompt, chosen, rejected) pairs and filtering placeholder prompts/responses and survey-only raters without preference data.

 \begin{table}[h]
  \centering
  \small
  \begin{tabular}{lrrl}
    \toprule
    \textbf{Training condition} & \textbf{\# pairs} & \textbf{\# raters} & \textbf{Notes} \\
    \midrule
    Full PRISM & 57{,}750 & 1{,}394 & unweighted \\
    Soft Panel & 57{,}750 & 1{,}394 & weighted by $\pi_i$ \\
    Hard Panel & 5{,}737 & 145 & panel-filtered \\
    US-Rep & 9{,}541 & 230 & PRISM US-Rep subset \\
    \bottomrule
  \end{tabular}
  \caption{Dataset sizes for each training condition. ``\# raters'' counts unique PRISM rater IDs appearing in the prepared pairs for that condition.}
  \label{tab:data-sizes}
\end{table}

\subsection{Locale Composition of Panels}
\label{app:limitations-locale}
Our sortition quotas target U.S.\ demographic proportions, but our default configuration does not filter PRISM raters by \texttt{study\_locale}.
In the resulting hard panel for the US target (panel size $k{=}145$), $56/145$ raters appearing in the training pairs have \texttt{study\_locale=us}; the remainder are mainly from the UK (27) and other locales.
This is a practical compromise to enlarge the feasible rater pool; future work should evaluate panels drawn strictly from U.S.\ locales when sufficient data is available.

\subsection{Panel Feasibility and Quota Satisfaction}
\label{app:panel-feasibility}
We verify that our quota specifications admit feasible panels at our chosen size $k{=}145$ and tolerance $\tau{=}0.1$ (relative bounds), using the LEXIMIN sortition algorithm \cite{flanigan2021fair}.
For the U.S.\ target configuration (seven attributes), we sample a feasible panel of size 145 with zero quota violations from the filtered pool of raters with preference data; across constrained (non-slack) categories, the maximum absolute deviation from target proportions is $4.4$ percentage points.

\subsection{Soft-Panel Selection-Probability Diagnostics}
\label{app:soft-weights}
For the soft panel, we compute the exact selection probabilities $\{\pi_i\}$ from the LEXIMIN-induced lottery and verify the basic correctness properties.
In the census configuration, $\sum_i \pi_i \approx k{=}145$ as required by the linearity of expectation.
After filtering survey-only raters, nearly all selection-probability mass corresponds to raters with preference data; any remaining missing mass reflects filtered or empty preference comparisons rather than invalid weights.

\subsection{Ablations}
\label{app:ablations}
\paragraph{Ordering bias.}
Position bias is a known concern in LLM-as-a-judge settings and can be substantial depending on the prompt and format \cite{shi2025judging}. In our 8B evaluation (5 models; $15{,}000$ rankings), the Pearson correlation between the presented position and the final rank is $\rho=0.047$, and the probability that the response shown at a given position is ranked \#1 ranges from $0.179$ to $0.224$ (a $4.5$ percentage-point spread across positions). We therefore randomize the response order independently for each judge call in all reported results.

\paragraph{Verbosity bias.}
We test whether response length predicts preferences by correlating response length with rank and by checking whether longer responses win implied pairwise comparisons. Across all responses classified with $75{,}000$ ($3000$ questions $\times 5$ judges $\times 5$ models), the correlations between response length and rank are small: $\rho=-0.056$ (characters) and $\rho=-0.044$ (words). In implied pairwise comparisons ($149{,}290$ total), the longer response wins $53.7\%$ of the time, indicating only a mild verbosity preference that does not explain the main ranking.

\subsection{Mallows Model Fit and Stability}
\label{app:mallows}
We fit a Kendall-distance Mallows model by (i) identifying the consensus ranking $\pi_0$ via brute force search over $5!=120$ permutations (for the five-model setting) and (ii) estimating the dispersion parameter $\phi$ by one-dimensional likelihood maximization.
Bootstrap refits ($n{=}1000$) show the consensus is stable (exact-match rate $1.00$), with Hard ranked \#1 and US-Rep ranked \#2 in all bootstrap resamples in the 8B setting (see also \cref{fig:score-grid-8b}).

\subsection{Raw Scoring Outputs}
\label{app:raw-scores}
For reproducibility and to make the plots in \cref{fig:score-grid-8b} numerically explicit, Tables~\ref{tab:raw-borda-copeland}, \ref{tab:raw-kemeny-mallows}, and \ref{tab:raw-bt-pl} report the raw scores computed from our evaluation artifacts.
We also report the corresponding raw scores for the normalized-step 8B ablation in \cref{fig:score-grid-8b-normalized} and Tables~\ref{tab:raw-borda-copeland-8b-normalized} -- \ref{tab:raw-kemeny-mallows-8b-normalized}.
For completeness, we also report the corresponding raw-score tables for Llama-3.2-1B (Tables~\ref{tab:raw-borda-copeland-1b} -- \ref{tab:raw-kemeny-mallows-1b}) and Llama-3.2-3B (Tables~\ref{tab:raw-borda-copeland-3b} -- \ref{tab:raw-kemeny-mallows-3b}).
Listwise scores (Borda, Copeland, Kemeny, PL, Mallows) use all $15{,}000$ judge rankings; BT is fitted on $30{,}000$ pairwise preferences derived from the listwise rankings (8B setting).

\subsection{Training Update Magnitudes}
\label{app:update-magnitude}
Although our objectives are normalized to preserve equal voter voice and a comparable per-step loss scale, the realized update magnitudes can still differ because data distributions differ across conditions. We quantify this by measuring parameter-change magnitude from the base checkpoint after the matched $3538$-step training budget. For Llama-3.1-8B, relative $\ell_2$ deltas are Full $0.00240$, Soft $0.00235$, Hard $0.00275$, US-Rep $0.00291$, indicating larger updates for the smaller hard/US-Rep datasets. We therefore also train normalized-step variants for Hard and US-Rep (Hard: $2200$ steps; US-Rep: $2100$ steps, chosen by decreasing steps in increments of $100$ until the relative $\ell_2$ matched Full/Soft) and find that the ranking order persists even when update magnitudes are matched (see \cref{fig:score-grid-8b-normalized}), though the gaps shrink.

\begin{table}[H]
  \centering
  \small
  \begin{tabular}{lccccc}
    \toprule
    \textbf{Model} & \textbf{Steps} & $\|\Delta\theta\|_2$ & RMS ($\times 10^{-5}$) & Mean abs ($\times 10^{-6}$) & Rel.\ $\ell_2$ \\
    \midrule
    Full PRISM & 3538 & 2.996 & 3.344 & 6.544 & 0.002400 \\
    Soft Panel & 3538 & 2.938 & 3.279 & 6.499 & 0.002354 \\
    Hard Panel & 3538 & 3.435 & 3.834 & 7.070 & 0.002752 \\
    US-Rep & 3538 & 3.626 & 4.047 & 7.483 & 0.002905 \\
    Hard Panel (normalized) & 2200 & 3.021 & 3.371 & 6.104 & 0.002420 \\
    US-Rep (normalized) & 2100 & 2.940 & 3.281 & 5.971 & 0.002355 \\
    \bottomrule
  \end{tabular}
  \caption{Training update magnitudes for Llama-3.1-8B measured as parameter deltas from the base checkpoint. RMS and mean-absolute values are computed over all parameters. ``Rel.\ $\ell_2$'' denotes $\|\Delta\theta\|_2/\|\theta\|_2$ for the base checkpoint.}
  \label{tab:update-magnitude-8b}
\end{table}

\subsection{Reproducibility Details}
\label{app:reproducibility}
All experiments use a single H200 GPU, full-parameter fine-tuning in bf16, and a frozen reference model initialized from the same base checkpoint.
We fix the optimization budget to $3538$ updates for every model and condition, with the same effective batch size (per-device batch size $\times$ gradient accumulation, single GPU), and resample smaller datasets with replacement to reach the same update count.
Evaluation uses identical decoding settings and a shared chat template across all models.
Judging uses GPT-5.2 at temperature $0$ with JSON-formatted outputs and randomized response order for each judge call.
All bootstrap confidence intervals use ranking-level resampling with $n=1000$.

\subsection{Model-Size Scaling}
\label{app:model-size}
We include the 1B and 3B score grids in Figure~\ref{fig:appendix-grids}. The same ordering (Hard $>$ US-Rep $>$ Soft $>$ Full $>$ Base) appears at all sizes, while effect sizes increase with model capacity (see Figure~\ref{fig:model-size-effect} in the main text).

\begin{figure*}[t]
  \centering
  \includegraphics[width=0.49\textwidth]{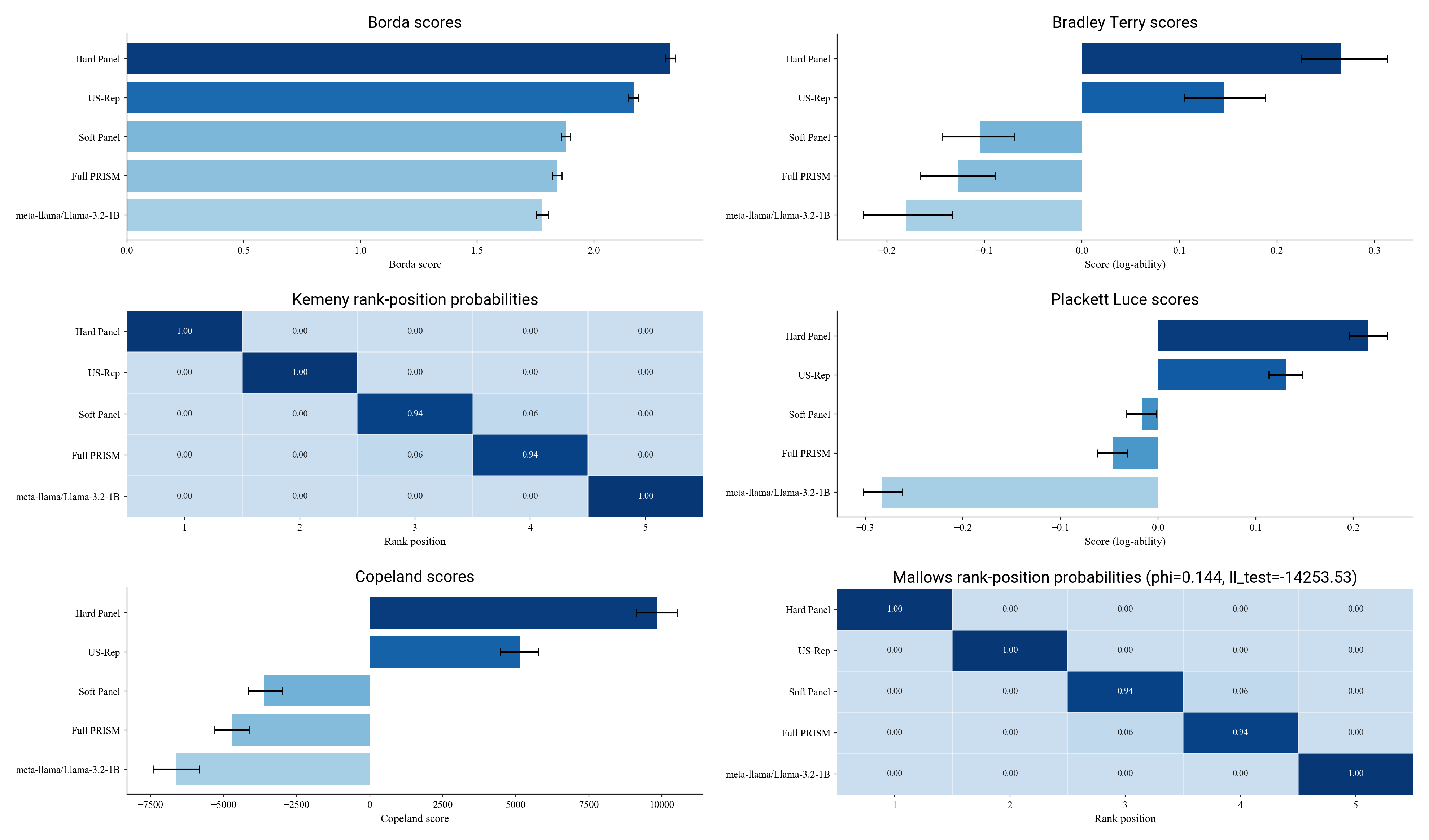}
  \includegraphics[width=0.49\textwidth]{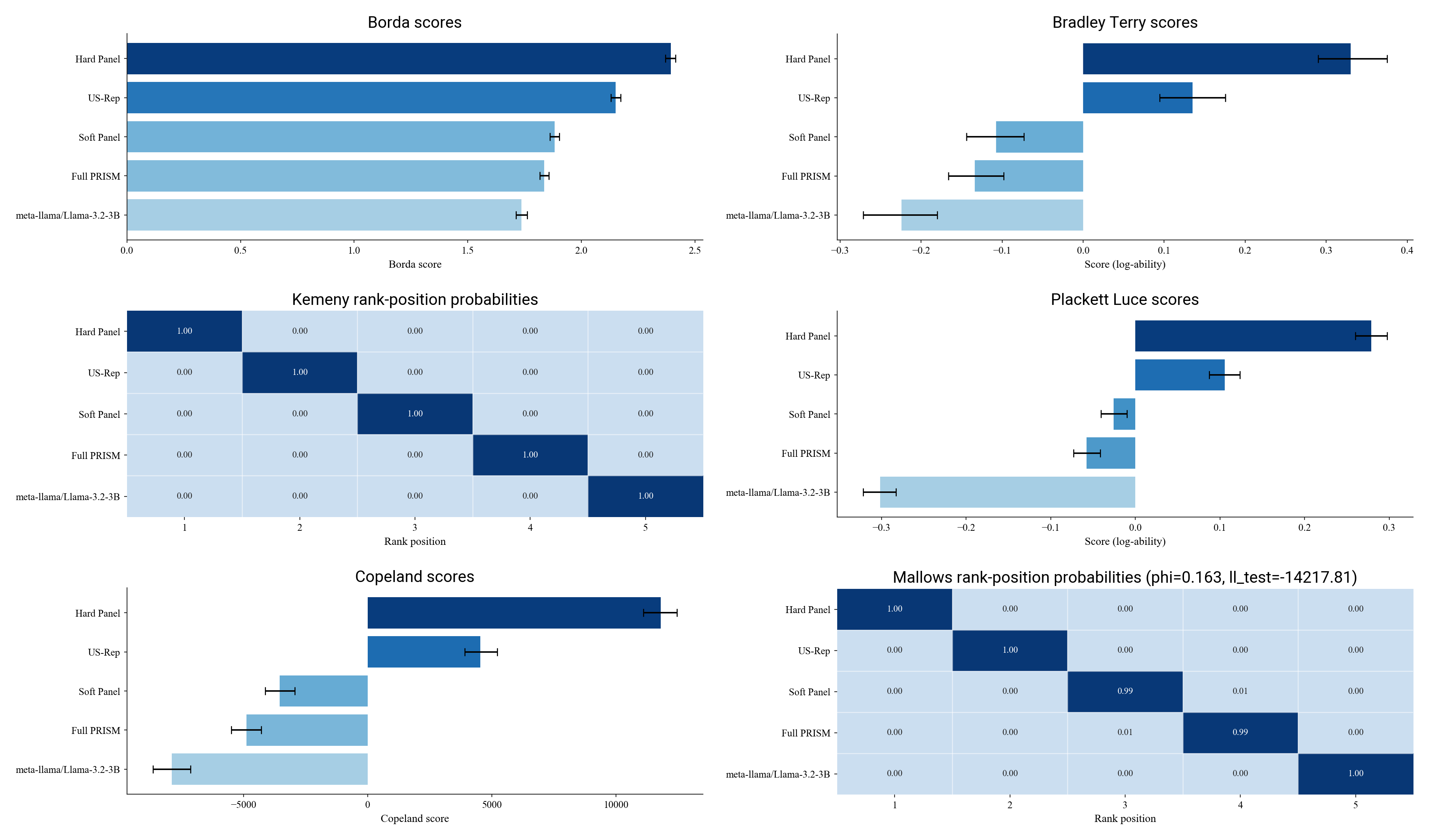}
  \caption{Score grids for Llama-3.2-1B (left) and Llama-3.2-3B (right) under the same evaluation protocol.}
  \label{fig:appendix-grids}
\end{figure*}

\begin{table}[H]
  \centering
  \small
  \begin{tabular}{lccc}
    \toprule
    \textbf{Model} & \textbf{Borda avg} & \textbf{Copeland win rate} & \textbf{Copeland} \\
    \midrule
    Hard Panel & 2.459 [2.437, 2.480] & 0.615 [0.609, 0.620] & 13{,}762 [13{,}100, 14{,}414] \\
    US-Rep & 2.118 [2.097, 2.139] & 0.530 [0.524, 0.535] & 3{,}542 [2{,}900, 4{,}174] \\
    Soft Panel & 2.006 [1.986, 2.025] & 0.501 [0.497, 0.506] & 167 [-416, 752] \\
    Full PRISM & 1.930 [1.909, 1.950] & 0.482 [0.477, 0.487] & -2{,}113 [-2{,}720, -1{,}508] \\
    Base & 1.488 [1.463, 1.514] & 0.372 [0.366, 0.379] & -15{,}358 [-16{,}108, -14{,}576] \\
    \bottomrule
  \end{tabular}
  \caption{Raw Borda and Copeland scores computed from $15{,}000$ list-wise rankings, with 95\% bootstrap confidence intervals (ranking-level bootstrap, $n{=}1000$). ``Borda avg'' is the average per-ranking Borda score (higher is better). ``Copeland'' is wins minus losses in the implied pairwise tournament (higher is better). Numeric values are reported in the released artifacts for each model size.}
  \label{tab:raw-borda-copeland}
\end{table}

\begin{table}[H]
  \centering
  \small
  \begin{tabular}{ll}
    \toprule
    \textbf{Method} & \textbf{Consensus ranking / parameters} \\
    \midrule
    Kemeny-Young & Hard $\succ$ US-Rep $\succ$ Soft $\succ$ Full $\succ$ Base; exact-match (bootstrap) $1.00$ \\
    Mallows (Kendall) & $\pi_0$ matches Kemeny; $\phi=0.219$; $\ell_{\text{test}}=-14107.4$ \\
    \bottomrule
  \end{tabular}
  \caption{Kemeny and Mallows consensus summaries (computed from $15{,}000$ listwise rankings; bootstrap $n{=}1000$ where reported).}
  \label{tab:raw-kemeny-mallows}
\end{table}

\begin{table}[H]
  \centering
  \small
  \begin{tabular}{lcc}
    \toprule
    \textbf{Model} & \textbf{BT score (mean [95\% CI])} & \textbf{PL score (mean [95\% CI])} \\
    \midrule
    Hard Panel & 0.384 [0.345, 0.427] & 0.375 [0.357, 0.393] \\
    US-Rep & 0.102 [0.062, 0.141] & 0.113 [0.097, 0.129] \\
    Soft Panel & 0.017 [-0.018, 0.054] & 0.071 [0.055, 0.087] \\
    Full PRISM & -0.070 [-0.110, -0.034] & 0.026 [0.009, 0.041] \\
    Base & -0.432 [-0.484, -0.381] & -0.584 [-0.604, -0.561] \\
    \bottomrule
  \end{tabular}
  \caption{Bradley--Terry (BT) and Plackett--Luce (PL) scores with 95\% bootstrap confidence intervals (bootstrap $n{=}1000$). BT is fit to the pairwise preferences derived from listwise rankings. Scores are log-abilities up to an additive constant. Numeric values are reported in the released artifacts for each model size.}
  \label{tab:raw-bt-pl}
\end{table}

\paragraph{Llama-3.1-8B normalized-step raw scores.}
\begin{figure}[H]
  \centering
  \includegraphics[width=0.95\textwidth]{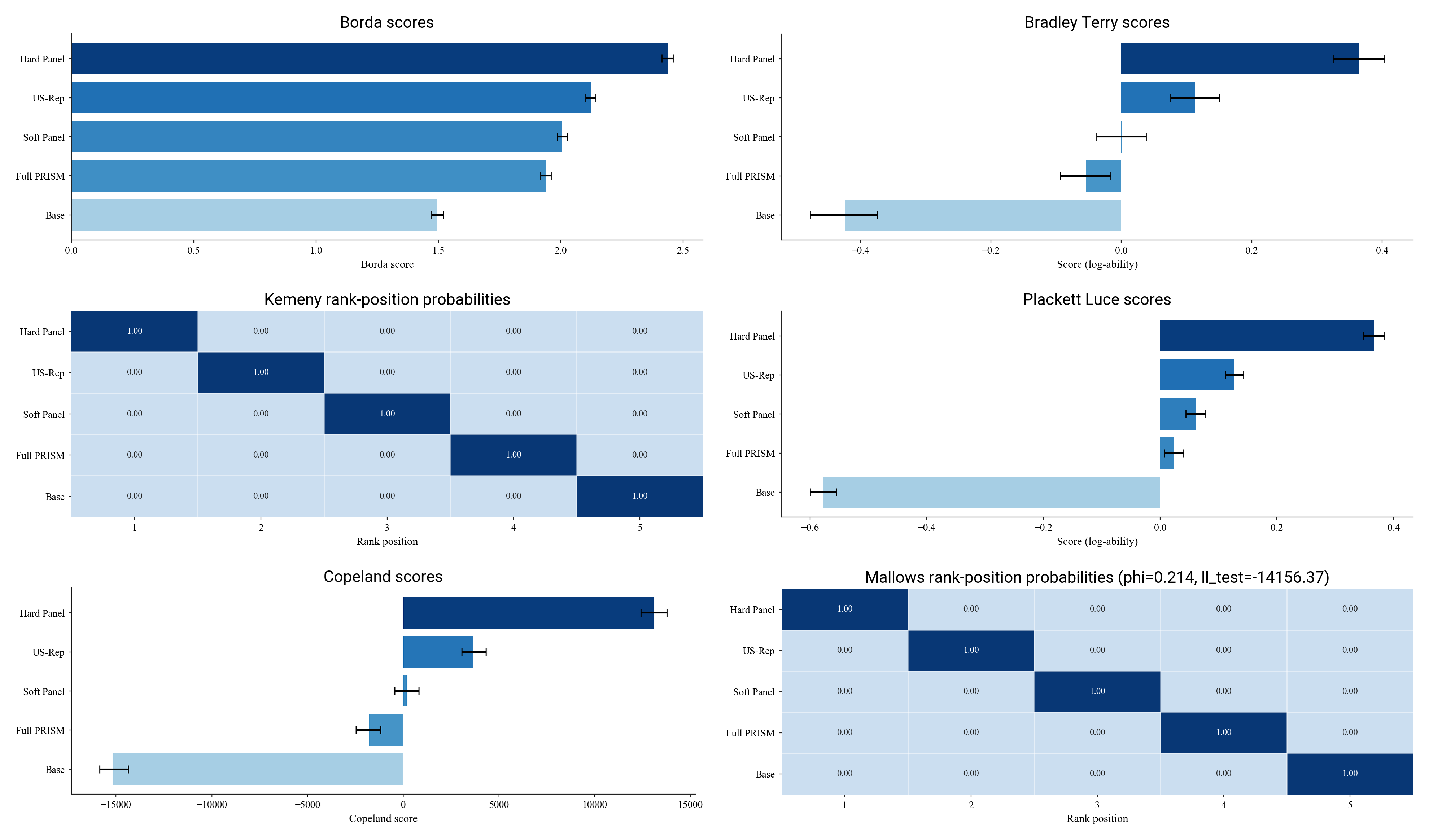}
  \caption{Normalized-step score grid for Llama-3.1-8B, where Hard Panel and US-Rep are retrained for fewer steps (Hard: $2200$, US-Rep: $2100$) to match the update magnitude of Full/Soft.}
  \label{fig:score-grid-8b-normalized}
\end{figure}

\begin{table}[H]
  \centering
  \small
  \begin{tabular}{lccc}
    \toprule
    \textbf{Model} & \textbf{Borda avg} & \textbf{Copeland win rate} & \textbf{Copeland} \\
    \midrule
    Hard Panel & 2.437 [2.414, 2.459] & 0.609 [0.603, 0.615] & 13{,}110 [12{,}416, 13{,}770] \\
    US-Rep & 2.122 [2.102, 2.144] & 0.531 [0.525, 0.536] & 3{,}667 [3{,}052, 4{,}314] \\
    Soft Panel & 2.006 [1.985, 2.027] & 0.502 [0.496, 0.507] & 182 [-445, 818] \\
    Full PRISM & 1.940 [1.918, 1.961] & 0.485 [0.479, 0.490] & -1{,}801 [-2{,}468, -1{,}182] \\
    Base & 1.495 [1.472, 1.521] & 0.374 [0.368, 0.380] & -15{,}158 [-15{,}852, -14{,}358] \\
    \bottomrule
  \end{tabular}
  \caption{Llama-3.1-8B normalized-step Borda and Copeland scores (bootstrap $n{=}1000$).}
  \label{tab:raw-borda-copeland-8b-normalized}
\end{table}

\begin{table}[H]
  \centering
  \small
  \begin{tabular}{lcc}
    \toprule
    \textbf{Model} & \textbf{BT score (mean [95\% CI])} & \textbf{PL score (mean [95\% CI])} \\
    \midrule
    Hard Panel & 0.363 [0.325, 0.404] & 0.366 [0.348, 0.384] \\
    US-Rep & 0.114 [0.076, 0.150] & 0.127 [0.112, 0.143] \\
    Soft Panel & 0.000 [-0.038, 0.038] & 0.061 [0.044, 0.078] \\
    Full PRISM & -0.054 [-0.094, -0.016] & 0.023 [0.007, 0.040] \\
    Base & -0.423 [-0.477, -0.374] & -0.578 [-0.600, -0.555] \\
    \bottomrule
  \end{tabular}
  \caption{Llama-3.1-8B normalized-step BT and PL scores (bootstrap $n{=}1000$).}
  \label{tab:raw-bt-pl-8b-normalized}
\end{table}

\begin{table}[H]
  \centering
  \small
  \begin{tabular}{ll}
    \toprule
    \textbf{Method} & \textbf{Consensus ranking / parameters} \\
    \midrule
    Kemeny-Young & Hard $\succ$ US-Rep $\succ$ Soft $\succ$ Full $\succ$ Base; exact-match (bootstrap) $1.00$ \\
    Mallows (Kendall) & $\pi_0$ matches Kemeny; $\phi=0.214$; $\ell_{\text{test}}=-14156.4$ \\
    \bottomrule
  \end{tabular}
  \caption{Llama-3.1-8B normalized-step Kemeny and Mallows summaries (bootstrap $n{=}1000$ where reported).}
  \label{tab:raw-kemeny-mallows-8b-normalized}
\end{table}

\paragraph{Llama-3.2-1B raw scores.}
\begin{table}[H]
  \centering
  \small
  \begin{tabular}{lcc}
    \toprule
    \textbf{Model} & \textbf{Borda avg (mean [95\% CI])} & \textbf{Copeland (mean [95\% CI])} \\
    \midrule
    Hard Panel & 2.328 [2.305, 2.350] & 9{,}836.294 [9{,}137.950, 10{,}514.050] \\
    US-Rep & 2.171 [2.149, 2.192] & 5{,}125.712 [4{,}469.900, 5{,}768.400] \\
    Soft Panel & 1.880 [1.861, 1.900] & -3{,}600.814 [-4{,}156.050, -2{,}987.950] \\
    Full PRISM & 1.842 [1.823, 1.862] & -4{,}729.654 [-5{,}302.100, -4{,}127.950] \\
    Base & 1.779 [1.753, 1.805] & -6{,}631.538 [-7{,}416.050, -5{,}839.750] \\
    \bottomrule
  \end{tabular}
  \caption{Llama-3.2-1B Borda and Copeland scores (bootstrap $n{=}1000$).}
  \label{tab:raw-borda-copeland-1b}
\end{table}

\begin{table}[H]
  \centering
  \small
  \begin{tabular}{lcc}
    \toprule
    \textbf{Model} & \textbf{BT score (mean [95\% CI])} & \textbf{PL score (mean [95\% CI])} \\
    \midrule
    Hard Panel & 0.267 [0.225, 0.313] & 0.215 [0.196, 0.235] \\
    US-Rep & 0.146 [0.105, 0.188] & 0.132 [0.113, 0.148] \\
    Soft Panel & -0.106 [-0.143, -0.069] & -0.017 [-0.032, -0.002] \\
    Full PRISM & -0.127 [-0.166, -0.089] & -0.047 [-0.062, -0.032] \\
    Base & -0.180 [-0.224, -0.133] & -0.283 [-0.302, -0.262] \\
    \bottomrule
  \end{tabular}
  \caption{Llama-3.2-1B BT and PL scores (bootstrap $n{=}1000$).}
  \label{tab:raw-bt-pl-1b}
\end{table}

\begin{table}[H]
  \centering
  \small
  \begin{tabular}{ll}
    \toprule
    \textbf{Method} & \textbf{Consensus ranking / parameters} \\
    \midrule
    Kemeny-Young & Hard $\succ$ US-Rep $\succ$ Soft $\succ$ Full $\succ$ Base; exact-match $0.94$ \\
    Mallows (Kendall) & $\pi_0$ matches Kemeny; $\phi=0.144$; $\ell_{\text{test}}=-14253.5$ \\
    \bottomrule
  \end{tabular}
  \caption{Llama-3.2-1B Kemeny and Mallows summaries (bootstrap $n{=}1000$ where reported).}
  \label{tab:raw-kemeny-mallows-1b}
\end{table}

\paragraph{Llama-3.2-3B raw scores.}
\begin{table}[H]
  \centering
  \small
  \begin{tabular}{lcc}
    \toprule
    \textbf{Model} & \textbf{Borda avg (mean [95\% CI])} & \textbf{Copeland (mean [95\% CI])} \\
    \midrule
    Hard Panel & 2.393 [2.370, 2.415] & 11{,}797.238 [11{,}102.000, 12{,}446.350] \\
    US-Rep & 2.151 [2.130, 2.174] & 4{,}537.120 [3{,}905.850, 5{,}208.400] \\
    Soft Panel & 1.882 [1.862, 1.902] & -3{,}542.388 [-4{,}134.000, -2{,}931.950] \\
    Full PRISM & 1.837 [1.817, 1.857] & -4{,}892.736 [-5{,}500.100, -4{,}283.950] \\
    Base & 1.737 [1.712, 1.762] & -7{,}899.234 [-8{,}642.050, -7{,}129.950] \\
    \bottomrule
  \end{tabular}
  \caption{Llama-3.2-3B Borda and Copeland scores (bootstrap $n{=}1000$).}
  \label{tab:raw-borda-copeland-3b}
\end{table}

\begin{table}[H]
  \centering
  \small
  \begin{tabular}{lcc}
    \toprule
    \textbf{Model} & \textbf{BT score (mean [95\% CI])} & \textbf{PL score (mean [95\% CI])} \\
    \midrule
    Hard Panel & 0.331 [0.290, 0.375] & 0.279 [0.260, 0.297] \\
    US-Rep & 0.135 [0.094, 0.175] & 0.106 [0.087, 0.123] \\
    Soft Panel & -0.108 [-0.144, -0.073] & -0.026 [-0.041, -0.010] \\
    Full PRISM & -0.133 [-0.166, -0.098] & -0.057 [-0.073, -0.041] \\
    Base & -0.225 [-0.271, -0.180] & -0.302 [-0.322, -0.283] \\
    \bottomrule
  \end{tabular}
  \caption{Llama-3.2-3B BT and PL scores (bootstrap $n{=}1000$).}
  \label{tab:raw-bt-pl-3b}
\end{table}

\begin{table}[H]
  \centering
  \small
  \begin{tabular}{ll}
    \toprule
    \textbf{Method} & \textbf{Consensus ranking / parameters} \\
    \midrule
    Kemeny-Young & Hard $\succ$ US-Rep $\succ$ Soft $\succ$ Full $\succ$ Base; exact-match $1.00$ \\
    Mallows (Kendall) & $\pi_0$ matches Kemeny; $\phi=0.163$; $\ell_{\text{test}}=-14217.8$ \\
    \bottomrule
  \end{tabular}
  \caption{Llama-3.2-3B Kemeny and Mallows summaries (bootstrap $n{=}1000$ where reported).}
  \label{tab:raw-kemeny-mallows-3b}
\end{table}

\end{document}